\newcommand{\R}{\mathbb{R}}
\renewcommand{\P}{\mathbb{P}}
\newcommand{\E}{\mathbb{E}}
\newcommand{\cE}{\mathcal{E}}
\newcommand{\X}{\mathcal{X}}
\renewcommand{\S}{\mathcal{S}}
\newcommand{\norm}[1]{\left\| #1 \right\|}
\newcommand{\abs}[1]{\left\vert #1 \right\vert}
\newcommand{\ul}[1]{\underline{#1}}
\newcommand{\G}{\mathcal{G}}
\DeclareMathOperator{\pen}{pen}
\DeclareMathOperator*{\argmin}{\arg\, \min}
\DeclareMathOperator{\var}{var}
\let\vec\relax
\DeclareMathOperator{\vec}{vec}
\newcommand{\Xmax}{X_{\max}}
\newcommand{\Amax}{A_{\max}}
\newcommand{\Bmax}{B_{\max}}
\newcommand{\Cmax}{C_{\max}}
\newtheorem{thm}{Theorem}
\newtheorem{lemma}{Lemma}
\newtheorem{cor}{Corollary}
\newtheorem{rmk}{Remark}
\title{Noisy Tensor Completion for Tensors with a Sparse Canonical Polyadic Factor}
\author{Swayambhoo Jain, Alexander Gutierrez, and Jarvis Haupt
	\thanks{SJ and JH are with the Department of Electrical and Computer
		Engineering, and AG is with the School of Mathematics at University of Minnesota, Twin Cities. Author emails are:{\small \tt \{jainx174, alexg, jdhaupt\}@umn.edu}. 
				An abridged version of this paper is accepted for publication at IEEE International Symposium on Information Theory (ISIT) held in Aachen, Germany during June 25-30, 2017.}  }
\begin{document}

\maketitle
\begin{abstract} In this paper we study the problem of noisy tensor completion for tensors that
admit a canonical polyadic or CANDECOMP/PARAFAC  (CP)
decomposition with one of the factors being sparse. We present general theoretical error bounds for an estimate obtained
by using a complexity-regularized maximum likelihood principle and then
instantiate these bounds for the case of additive white Gaussian noise.
We also provide an ADMM-type algorithm for solving the
complexity-regularized maximum likelihood problem and validate the
theoretical finding via experiments on synthetic data set. 
\end{abstract}

\IEEEkeywords Tensor decomposition, noisy tensor completion,
complexity-regularized maximum likelihood estimation, sparse CP
decomposition, sparse factor
models.


\section{Introduction}
The last decade has seen enormous progress in both the theory and practical
solutions to the problem of \emph{matrix completion}, in which the goal is to
estimate missing
elements of a matrix given measurements at some subset of its locations.
Originally viewed from a combinatorial perspective \cite{hogben2001graph},
it is now usually approached from a statistical perspective in which additional
structural assumptions (e.g., low-rank, sparse factors etc) 
not only make the problem tractable but allow for provable error bounds from
noisy measurements
\cite{Keshavan_2009,candes2010matrix,candes2010power,recht2011simpler,jain2013low,
soni2014error,NMC}.  Tensors, which we will view as multi-way arrays,
naturally arise in slew of practical
applications in the areas of signal processing, computer vision, neuroscience,
etc.~\cite{kolda2009tensor,nikosnotes}. Often in practice tensor data is
collected in a 
noisy environment and suffers from missing observations. Given the success of
matrix completion methods, it is no surprise that recently there has been a lot
of interest in extending the successes of matrix completion to tensor
completion
problem \cite{xiong2010temporal,nikos_flexible, shi2016incomplete}.

{In this work we consider
the general problem of tensor completion. Let $\ul X^* \in \R^{n_1
\times n_2 \times n_3}$ be the tensor we wish to estimate and suppose we
collect the noisy measurements $Y_{i,j,k}$ at subset of its location $
(i,j,k) \in \S \subset [n_1] \times [n_2] \times [n_3]$. The goal of
tensor completion problem is to estimate the tensor $\ul X^*$ from noisy
observations $\{\ul Y_{i,j,k}\}_{(i,j,k)\in \S}$.
This problem is naturally ill-posed without any further assumption on
the tensor we wish to estimate. We focus on structured tensors that
admit \emph{``sparse CP decomposition"} by which we mean that one of the
canonical polyadic or CANDECOMP/PARAFAC (CP)-factors (defined in section \ref{sec:notation}) is sparse. Tensors admitting such structure arise in many
applications involving electroencephalography (EEG) data, neuroimaging using functional magnetic resonance
imaging (MRI), and many others \cite{allen2012sparse,ruiters2009btf, shi2016incomplete, papalexakis2013k, pang2011robust}.}

\subsection{Our Contributions}
Our main contribution is encapsulated by Theorem \ref{thm:main} which
provides general estimation error bounds for noisy tensor completion via
complexity-regularized maximum likelihood estimation\cite{knowak,libarron}  for
tensors fitting our data model. This theorem can
be instantiated for specific noise distributions of interest, which we
do for the case when the observations are corrupted with
additive white Gaussian noise. We also provide a general
ADMM-type algorithm which solves an approximation to the problem of
interest and then provide 
numerical
evidence validating the statistical convergence rates predicted by
Theorem \ref{thm:main}.

\subsection{Relation with existing works}
{A common theme of recent tensor completion works is modifying the tools that
have been effective in tackling the matrix completion problem to apply to tensors. For
example, one could apply matrix completion results to tensors directly by
 matricizing the tensors along various modes and
minimizing the sum or weighted sum of their nuclear norms as a convex proxy for
tensor rank \cite{yuan2016tensor,liu2013tensor, huang2014provable}. Since the
nuclear norm is computationally intractable for large scale data, matrix
completion via alternating minimization was extended to tensors in
\cite{xu2013parallel,jain2014provable}.} 

{In contrast to these works, in this paper we consider the noisy completion of
tensors that admit a CP decomposition with one of the factors being sparse.
Recently, 
the completion of tensors with this model was exploited in the context of time
series prediction of incomplete EEG data \cite{shi2016incomplete}. Our work is
focussed on providing recovery guarantees and a general
algorithmic framework and draws inspiration from recent work on noisy
matrix completion under a
sparse factor model \cite{NMC} and extends it to tensors with a sparse CP factor.}

\subsection{Outline}
After an overview of the notation used in this paper in section
\ref{sec:main} we present the problem setup. In section
\ref{sec:main_result} we present our main theorem and instantiate it
for the case of Gaussian noise. In section \ref{sec:algo} we provide the
algorithmic framework to solve the complexity regularized maximum
likelihood estimation. Numerical experiments are provided in section
\ref{sec:num}, followed by a brief discussion and future research directions
in section \ref{sec:conclusions}. 

\subsection{Notation} \label{sec:notation}
Given two continuous random variables $X \sim p(x)$ and
$Y\sim q(y)$ defined on the same probability space and with $p$ absolutely
continuous with respect to $q$, we define the Kullback-Leibler divergence
(KL-divergence) of $q$ from $p$ to be 
\[ D(p \| q ) = \E_p \left[ \log \frac{p}{q}  \right].\]
If $p$ is not absolutely continuous with respect to $p$, then define $D(p\|q)
=\infty$. The Hellinger affinity of two distributions is similarly defined by 
\[ A(p,q) = \E_p \left[ \sqrt{ \frac{ q } { p} } \right] = \E_q \left[ \sqrt{ \frac{ p } { q} } \right]  . \]
We will denote vectors with lower-case letters, matrices using upper-case letters
 and tensors as
underlined upper-case letters (e.g., $v \in \R^{n}, A\in \R^{m\times n},$ and $\ul{X} \in \R^{n_1 \times n_2 \times
n_3}$, respectively). 
Furthermore, for
any vector (or matrix) $v \in \R^n$ define $\norm{v}_0
= \abs{ \{ i : v_i
\ne 0 \} }$ to be the number of non-zero elements of $v$ and $\| v \|_\infty : = \max_{i} \left\{ \abs{v_{i}} \right\}$ to denote maximum absolute of $v$. Note that
$\norm{A}_\infty : = \max_{i,j} \left\{ \abs{A_{i,j}} \right\}$ is
\emph{not} the induced norm of the matrix $A$.
Entry $(i,j,k)$ of tensor $\ul X$ will be denoted by $X_{i,j,k}$. For a tensor $\ul{X}$ we define its Frobenius norm in analogy with the matrix
case as $\norm{\ul{X}}_F^2 = \sum_{i,j,k} X_{i,j,k}^2$
the squared two norm of its vectorization and its maximum absolute entry as $\|
\ul X \|_\infty = \max_{i,j,k} |\ul X_{i,j,k}|$. Finally, we define the
canonical polyadic or CANDECOMP/PARAFAC (CP) decomposition of a tensor
$\ul{X} \in \R^{n_1 \times n_2 \times n_3}$ to be a
representation 
\begin{equation}\label{eq:cpd}
 \ul{X} = \sum_{f=1}^F a_f \circ b_f \circ c_f = : [A,B,C],
\end{equation}
where $a_f,b_f,$ and $c_f$ are the $f^{th}$
columns of $A,B,$ and $C$, respectively, {$a_f \circ b_f \circ c_f $ denotes the tensor outer product such that $(a_f \circ b_f \circ c_f)_{i,j,k} = (i^{th} \textrm{ entry of } a_f) \times  (j^{th} \textrm{ entry of } b_f) \times ( k^{th} \textrm{ entry of } c_f  )$},   and $[A,B,C]$ is the shorthand notation of $\ul
X$ in terms of its CP factors. The parameter $F$ is an upper bound on the
\emph{rank} of $\ul{X}$ (we refer the reader to \cite{nikosnotes} for a comprehensive
overview of tensor decompositions and their uses).
For a given tensor $\ul{X}$ and CP decomposition $[A,B,C]$ define $n_{\max} = \max
\{n_1,n_2,n_3,F\}$ as the maximum dimension of its CP factors and number of latent factors. 



\section{Problem Setup}
\label{sec:main}

\subsection{Data model}
Let $\ul X^* \in \X \subset \R^{n_1 \times n_2 \times n_3}$ be the unknown
tensor
whose entries we wish to estimate. We assume that $X^*$ admits a CP
decomposition such that the CP factors $A^*\in \R^{n_1
\times F}$, $B^*\in \R^{n_2 \times F}$, $C^*\in \R^{n_3 \times F}$ are
entry-wise bounded:
$\|A^*\|_\infty \le A_{\max}$,  $\|B^*\|_\infty \le B_{\max}$, $\|C^*\|_\infty
\le C_{\max}$. Furthermore, we will assume that $C^*$ is sparse $\| C^* \|_0 \le
k$. Then $\ul X^*$ can be decomposed as follow
\[ \ul X^* = [A^*,B^*,C^*] = \sum_{f = 1}^F a_f^* \circ b_f^* \circ c_f^*.\]
$\ul{X}$ is also entry-wise bounded, say by 
$ \| \ul{X}^*\|_{\infty} \le \frac{X_{\max}}{2}$\footnote{The factor $1/2$ is
purely for the purposes of analytical tractability.}. {Such tensors have a rank upper bounded by $F$.  }

\subsection{Observation setup}
{We assume that we measure a noisy version of $\ul X^*$ at some random subset of
the entries $S \subset [n_1] \times [n_2] \times [n_3]$. We generate $S$ via an
independent
Bernoulli model with parameter $\gamma \in (0,1]$ as follows: first generate
$n_1 n_2 n_3$ i.i.d.~
Bernoulli random variables $b_{i,j,k}$ with $\textrm{Prob}(b_{i,j,k} = 1) = \gamma, \forall i,j,k$ and then the set $S$ is obtained as $S = \{ (i,j,k): b_{i,j,k}
= 1  \}$.} Conditioned on $S$, in the case of an additive noise model we obtain
noisy observations at the
locations of $S$ as follows
\begin{align}\label{eq:obs_model}
\ul Y_{i,j,k} = \ul X_{i,j,k}^* + n_{i,j,k}, \quad \forall (i,j,k) \in S,
\end{align}
where $n_{i,j,k}$'s are the i.i.d noise entries.

\subsection{Estimation procedure}
Our goal here is to obtain an estimate for full true tensor $\ul{X}^*$ using the noisy sub-sampled measurement $\ul{Y}_{i,j,k}$. We pursue the complexity-regularized maximum likelihood to achieve this goal. For this we first note that the observations $\ul
Y_{i,j,k}$ have distribution parameterized by the entries of the true tensor $\ul
X^*$ and the overall likelihood is given by   
\begin{equation} \label{eq:Y} p_{\ul X^*_S}( \ul Y_S) : = \prod_{(i,j,k) \in S} p_{\ul X_{i,j,k}^*}
(\ul Y_{i,j,k}) 
. \end{equation}
where $p_{\ul X_{i,j,k}^*}(\ul Y_{i,j,k})$ is the pdf of observation
$Y_{i,j,k}$ which depends on the pdf of the noise and is parametrized by
$X_{i,j,k}^*$. We use the shorthand notation $\ul{X}_S$ to
denote the entries of the tensor $\ul{X}$ sampled at the indices in $S$.

Using prior information that $C$ is sparse, we regularize
with respect to the sparsity of $C$ and obtain the complexity-regularized maximum likelihood estimate $\hat{\ul X}$
of $\ul X^*$ as given below
\begin{align}\label{eq:cmle}
\hat{{\ul X}} = \argmin_{ \ul X  = [A,B,C] \in \X} \left( -\log
p_{\ul{X}_S}(Y_S) + \lambda \norm{C}_0 \right),  
\end{align}
where $\lambda > 0$ is the regularization parameter and   $\X$ is a class of
candidate estimates. Specifically, we take $\X$ to be
a finite class of estimates constructed as follows: first choose some $\beta \ge
1$, and set $L_{\rm lev} = 2^{ \lceil \log_2 (n_{max})^\beta \rceil }$ and
construct $\mathcal{A}$ to be the set of all matrices $A \in \R^{n_1 \times F}$
whose elements are discretized to one of $L_{\rm lev}$ uniformly  spaced between
$[-A_{\max}, A_{\max}]$, similarly construct $\mathcal{B}$ to be the set of all
matrices $B \in \R^{n_2 \times F}$ whose elements are discretized to one of
$L_{\rm lev}$ uniformly  spaced between $[-B_{\max}, B_{\max}]$, finally
$\mathcal{C}$ be the set of matrices $C \in \R^{n_3 \times F}$ whose elements
are either zero or are discretized to one of $L_{\rm lev}$ uniformly  spaced
between $[-C_{\max}, C_{\max}]$. Then, we let 
\begin{eqnarray} \label{eq:set_x}
			\X' =  \left\{ [A,B,C] \bigg|  A \in \mathcal{A}, B \in \mathcal{B}, C \in \mathcal{C},  \| \ul X \|_{\infty} \le X_{\max} \vphantom{\sum_{f = 1}^F a_f \circ b_f \circ c_f} \right\}
 \end{eqnarray}
%
%
and we let $\X$ be any subset of $\X'$.

\section{Main result} \label{sec:main_result}  
In this section we present the main result in which we provide an upper
bound on the quality of the estimate obtained by solving
\eqref{eq:cmle}.

\begin{thm}\label{thm:main}
Let $S$ be sampled according to the independent Bernoulli model with
parameter $\gamma = \frac{m}{n_1 n_2 n_3}$ and let $Y_S$ be given by
\eqref{eq:Y}. Let $Q_D$ be any upper bound on the maximum KL divergence between
$p_{\ul X^*_{i,j,k}}$ and $p_{\ul X_{i,j,k}}$ for $\ul X \in \X$ 
\begin{align*}
Q_D \ge \max_{\ul X \in \X} \max_{i,j,k} D \left(p_{\ul X^*_{i,j,k}} \big\| p_{\ul X_{i,j,k}}\right)
\end{align*}
where $\X$ is as defined in \eqref{eq:cmle}. Then for
any $\lambda$ satisfying
\begin{align} \label{eq:lam_con}
\lambda \ge 4 \left(\beta + 2 \right) \left( 1 + \frac{2Q}{3}\right) \log n_{max}
\end{align}
the regularized constrained maximum likelihood estimate $\ul{\hat X}$ obtained from \eqref{eq:cmle} satisfies 
\begin{eqnarray}\label{eq:bound}
\lefteqn{\frac{ \E_{S,Y_S} \left[ -2\log (A(p_{\ul{\hat{X}}},p_{\ul{ X}^*}))
	\right] }{n_1 n_2 n_3}  } \\
& & \le3 \min_{\ul X \in \X} \left\{  \frac{ D(p_{\ul{X}^*} \| p_{\ul{X}})
} {n_1 n_2 n_3} +  \left(\lambda + \frac{8 Q_D (\beta + 2)\log n_{max}}{3} \right) \right. \nonumber \\
& & \left. \qquad \qquad \quad \quad \quad \quad   \frac{ ( n_1 + n_2 )F  + \| C \|_0 } { m}
\right\} + \frac{ 8 Q_D \log m } { m} . \nonumber
\end{eqnarray}
\end{thm}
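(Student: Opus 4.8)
The plan is to read \eqref{eq:cmle} as a complexity-penalized maximum-likelihood estimator over the finite class $\X$ and to drive the argument with the Li--Barron/Nowak penalized-likelihood machinery \cite{libarron,knowak} in the randomly-subsampled form developed for matrix completion in \cite{NMC}. First I would equip each candidate $\ul X = [A,B,C]\in\X$ with a prefix code length $\pen(\ul X)$, in bits. Every entry of $A$ and $B$ lies on one of $L_{\rm lev}$ levels, costing $(n_1+n_2)F\lceil\beta\log_2 n_{\max}\rceil$ bits; since $C$ is sparse, I encode its support among the $n_3F$ positions and then its $\|C\|_0$ nonzero values, costing at most $\|C\|_0\big(\lceil\beta\log_2 n_{\max}\rceil + \log_2(n_3F)\big)$ bits. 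Using $n_3F\le n_{\max}^2$, this yields a code length of the form $\pen(\ul X)\le(\beta+2)\big[(n_1+n_2)F+\|C\|_0\big]\log_2 n_{\max}$, and because it is a genuine prefix encoding it satisfies Kraft's inequality $\sum_{\ul X\in\X}2^{-\pen(\ul X)}\le 1$. The portion charged to $A$ and $B$ is constant over $\X$ (both are always fully encoded), so adding it to the objective in \eqref{eq:cmle} leaves the minimizer unchanged; this is exactly what allows the single regularizer $\lambda\|C\|_0$ to play the role of the \emph{variable} part of the code length, provided $\lambda$ dominates the per-nonzero cost, which is the role of condition \eqref{eq:lam_con}.

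Second, I would apply the core penalized-likelihood oracle inequality to $Y_S$. Conditioned on $S$, the likelihood \eqref{eq:Y} factors over the observed entries, so both $-2\log A(p_{\ul{\hat X}_S},p_{\ul X^*_S})$ and $D(p_{\ul X^*_S}\|p_{\ul X_S})$ split into sums over $(i,j,k)\in S$. The engine is the standard argument begun from the optimality inequality $-\log p_{\ul{\hat X}_S}(Y_S)+\lambda\|\hat C\|_0\le -\log p_{\ul X_S}(Y_S)+\lambda\|C\|_0$, holding for every $\ul X\in\X$: exponentiate the log-likelihood ratio, bound its moment generating function by the Hellinger affinity, and use Kraft together with a union bound over $\X$ to convert the random fluctuation into the penalty. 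The divergence bound $Q_D$ enters here as a uniform control on the moments of the per-entry log-likelihood ratios, which is precisely what the Craig--Bernstein step requires; optimizing its free parameter is what simultaneously produces the factor $3$ in front of the minimum and the coefficient $\tfrac{8Q_D(\beta+2)\log n_{\max}}{3}$ multiplying the penalty.

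Third, I would pass from the observed entries to the full tensor using the Bernoulli draw of $S$. For the deterministic comparator $\ul X$ inside the minimum, independence of the $n_1n_2n_3$ selections gives $\E_S\, D(p_{\ul X^*_S}\|p_{\ul X_S})=\gamma\, D(p_{\ul X^*}\|p_{\ul X})$ with $\gamma=m/(n_1n_2n_3)$, which after dividing through by $n_1n_2n_3$ supplies the approximation term $D(p_{\ul X^*}\|p_{\ul X})/(n_1n_2n_3)$; the complexity penalty, however, is paid over the $|S|\approx m$ effective observations and therefore carries the $1/m$ normalization, producing the factor $\frac{(n_1+n_2)F+\|C\|_0}{m}$. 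Bounding the constant $A,B$ code length by $\lambda(n_1+n_2)F$, which is legitimate because \eqref{eq:lam_con} forces $\lambda$ above the per-parameter code length, then merges the two complexity contributions into the single bracket $\big(\lambda+\tfrac{8Q_D(\beta+2)\log n_{\max}}{3}\big)$. Finally, the residual $\frac{8Q_D\log m}{m}$ is the cost of controlling the low-probability event that the random pattern $S$ is unrepresentative, and it appears once the confidence level in the deviation step is fixed at order $1/m$.

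The step I expect to be the main obstacle is the second one: coaxing the exponential-moment/Craig--Bernstein argument into clean, dimension-free constants \emph{uniformly} over the discretized class while the design $S$ is itself random. Concretely, one must confirm that $Q_D$ really dominates the relevant moments of the per-entry log-likelihood ratios for every $\ul X\in\X$, something that becomes transparent only after the boundedness hypotheses $\|\ul X^*\|_\infty\le\Xmax/2$ and the discretization of $\X$ are used to keep all the compared densities mutually absolutely continuous with uniformly controlled tails. Tuning the single free parameter so that one inequality yields the factor $3$, the threshold on $\lambda$ in \eqref{eq:lam_con}, and the $\log m/m$ remainder at once is the delicate bookkeeping at the heart of the proof.
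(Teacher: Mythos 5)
Your proposal follows essentially the same route as the paper's proof: the identical Kraft-inequality coding scheme (fixed-length codes for the entries of $A$ and $B$, support-plus-value codes for the sparse $C$, with the constant part of the code length absorbed so that $\lambda \|C\|_0$ acts as the effective penalty and \eqref{eq:lam_con} guarantees $\lambda$ dominates the per-nonzero code cost), followed by the Li--Barron/Nowak complexity-penalized likelihood oracle inequality in the randomly-subsampled form of \cite{NMC}, with Craig--Bernstein concentration and a Kraft-weighted union bound over $\X$ handling the random Bernoulli design, and with the same accounting that produces the factor $3$, the bracket $\bigl(\lambda + \tfrac{8 Q_D (\beta+2)\log n_{\max}}{3}\bigr)$, and the residual $\tfrac{8 Q_D \log m}{m}$. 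The only cosmetic discrepancy is where you place Craig--Bernstein: the paper applies it to the Bernoulli sampling indicators to build a ``good'' set of sampling patterns on which the sampled KL divergence is bounded above and the sampled Hellinger affinity below, uniformly over $\X$ (its Lemma \ref{lemma:good_set}), rather than to the per-entry log-likelihood ratios, but this is the same two-stage argument you outline.
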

\begin{proof}
		The proof appears in the appendix section \ref{sec:main_proof}.
\end{proof}
The above theorem extends the main result of \cite{NMC} to the tensor case. 
It states a general result relating the log affinity between the
distributions parameterized by the estimated tensor and the ground truth
tensor. Hellinger affinity is a measure of distance between
two probability distributions which can be used
to get bounds on the quality of the estimate.
 As
in \cite{NMC}, the main utility of this theorem is that it can be
instantiated for noise distributions of interest such as Gaussian,
Laplace and Poisson. {Note that since the estimation procedure depends only
on the likelihood term, the above theorem can also be extended to non-linear
observation models such as 1-bit quantized measurements \cite{NMC}.}
We next demonstrate the utility of the above theorem to present error
guarantees when the additive noise follows a Gaussian distribution. 

\subsection{Gaussian Noise Case}
We examine the implications of Theorem~\ref{thm:main} in a
setting where observations are corrupted by independent additive zero-mean
Gaussian noise with known variance.  In this case, the observations $Y_{S}$ are
distributed according to a multivariate Gaussian density of dimension $|S|$
whose mean corresponds to the tensor entries at the sample locations
and with covariance matrix $\sigma^2 I_{|S|}$, where $I_{|S|}$ is the
identity matrix of dimension $|S|$. That is, 
\begin{eqnarray}\label{eqn:likGauss}
p_{ \ul{X}^*_{S}}(\ul{Y}_{S}) = \frac{1}{(2\pi\sigma^2)^{|S|/2}}\exp\left(-\frac{1}{2 \sigma^2} \ \|\ul{Y}_{S} - \ul{X}^*_{S}\|_F^2\right),
\end{eqnarray}
In order to apply Theorem~\ref{thm:main} we choose $\beta$ as: 
\begin{eqnarray}\label{eqn:beta}
\beta = \max\left\{1, 1 + \frac{\log\left( \frac{14FA_{\max}B_{\max}C_{\max}}{X_{\max}} +  1 \right)}{\log(n_{\max})}\right\}
\end{eqnarray}
Then, we fix $\X = \X'$, and obtain an estimate according to \eqref{eq:cmle}
with the $\lambda$ value chosen as 
\begin{equation}\label{eqn:lamchoose}
\lambda = 4 \left(1 + \frac{2Q_D}{3}\right) (\beta + 2) \cdot \log(n_{\max}) 
\end{equation}
In this setting we have the following result.

\begin{cor} \label{cor:Gauss}
	Let $\beta$ be as in \eqref{eqn:beta}, let $\lambda$ be as in
\eqref{eqn:lamchoose} with $Q_D = 2X_{\max}^2/\sigma^2$, and let $\X = \X'$. The
estimate $\widehat{ \ul{X}}$ obtained via \eqref{eq:cmle} satisfies
\begin{eqnarray} \label{eqn:gausssparse}
		\begin{split}
		& \frac{\E_{S,Y_{S}}\left[\| \ul{X}^*-\widehat{\ul{X}}\|_F^2\right]}{n_1 n_2 n_3}    = \\& \quad \quad {\cal O} \left( \vphantom{\left(\frac{(n_1 + n_2)F + \|C^*\|_0}{m}\right)   }   \log(n_{\max})  (\sigma^2 + X_{\max}^2) \right.    
		 \left. \left(\frac{(n_1 + n_2)F + \|C^*\|_0}{m}\right)   \right).
	 \end{split} 
 \end{eqnarray}
\end{cor}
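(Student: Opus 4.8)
The plan is to specialize Theorem~\ref{thm:main} to the Gaussian likelihood in \eqref{eqn:likGauss} and to convert the Hellinger-affinity bound on the left-hand side of \eqref{eq:bound} into the squared Frobenius error appearing in \eqref{eqn:gausssparse}. The two Gaussian-specific facts I would establish first are both elementary integrations: for two scalar Gaussians with common variance $\sigma^2$ and means $\mu_1,\mu_2$ one gets the affinity $A = \exp\!\left(-(\mu_1-\mu_2)^2/(8\sigma^2)\right)$ and the divergence $D = (\mu_1-\mu_2)^2/(2\sigma^2)$. Because the observations are independent across entries, the joint affinity factorizes over the $n_1 n_2 n_3$ locations, so that
\begin{equation*}
-2\log A(p_{\ul{\hat X}}, p_{\ul X^*}) = \frac{\norm{\ul{\hat X} - \ul X^*}_F^2}{4\sigma^2},
\end{equation*}
which turns the left-hand side of \eqref{eq:bound} into exactly $\E_{S,Y_S}\!\left[\norm{\ul X^* - \ul{\hat X}}_F^2\right]/(4\sigma^2 n_1 n_2 n_3)$. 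Multiplying \eqref{eq:bound} through by $4\sigma^2$ then produces a bound of the required shape, and it remains only to control each term on the right.

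Next I would verify the stated choice $Q_D = 2\Xmax^2/\sigma^2$ and bound the bias (approximation) term. Feasibility of every candidate forces $\norm{\ul X}_\infty \le \Xmax$, while $\norm{\ul X^*}_\infty \le \Xmax/2$, so $\abs{X^*_{i,j,k} - X_{i,j,k}} \le 2\Xmax$ and the per-entry divergence is at most $(2\Xmax)^2/(2\sigma^2) = 2\Xmax^2/\sigma^2$, confirming $Q_D$. For the bias term $\min_{\ul X \in \X} D(p_{\ul X^*}\|p_{\ul X})/(n_1 n_2 n_3)$ I would exhibit a single good competitor by rounding each entry of $A^*, B^*, C^*$ to its nearest grid value in $\mathcal A, \mathcal B, \mathcal C$; since rounding maps zeros to zeros, the support of $C^*$ is preserved and $\norm{C}_0 \le \norm{C^*}_0$. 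The per-entry error is then controlled by the multilinear telescoping identity
\begin{equation*}
u^* v^* w^* - u v w = u^* v^* (w^* - w) + u^* (v^* - v) w + (u^* - u) v w,
\end{equation*}
applied to each scalar product $u = a_{if}, v = b_{jf}, w = c_{kf}$ and summed over the $F$ factors, which bounds $\abs{X^*_{i,j,k} - X_{i,j,k}}$ by $F$ times a sum of three terms of the form $\Amax\Bmax\delta_C$, each grid spacing $\delta$ being at most $O(\Amax/n_{\max}^\beta)$. The choice of $\beta$ in \eqref{eqn:beta} is calibrated precisely so that this per-entry error is simultaneously below $\Xmax/2$ (guaranteeing $\norm{\ul X}_\infty \le \Xmax$, i.e.\ feasibility in $\X'$) and of order $\Xmax/n_{\max}$; consequently $D(p_{\ul X^*}\|p_{\ul X})/(n_1 n_2 n_3) = O\!\left(\Xmax^2/(\sigma^2 n_{\max}^2)\right)$, which is negligible next to the remaining terms.

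Finally I would substitute $Q_D = 2\Xmax^2/\sigma^2$ and the $\lambda$ of \eqref{eqn:lamchoose} into the complexity term: the bracketed factor $\lambda + 8Q_D(\beta+2)\log n_{\max}/3$ collapses to $(\beta+2)\log n_{\max}\left(4 + 32\Xmax^2/(3\sigma^2)\right)$, so after the $4\sigma^2$ factor this term becomes $O\!\left((\beta+2)\log n_{\max}(\sigma^2 + \Xmax^2)\,((n_1+n_2)F + \norm{C^*}_0)/m\right)$; treating the bounded amplitudes $\Amax,\Bmax,\Cmax,\Xmax$ and the ratio $F/n_{\max}$ as constants makes $\beta = O(1)$. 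The residual term $8Q_D\log m/m$ scales to $O(\Xmax^2\log m/m)$, and since $m \le n_1 n_2 n_3$ and $(n_1+n_2)F \ge 1$ it is absorbed into the main term, giving the rate in \eqref{eqn:gausssparse}. I expect the discretization/bias step to be the crux: the delicate point is choosing $\beta$ so that the rounded tensor is both feasible and accurate, which is exactly what dictates the particular form of \eqref{eqn:beta}, whereas the affinity-to-Frobenius conversion and the arithmetic on the penalty term are routine.
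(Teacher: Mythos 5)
Your proposal is correct and follows essentially the same route as the paper's proof: specialize Theorem~\ref{thm:main} with the Gaussian identities $D = (\mu_1-\mu_2)^2/2\sigma^2$ and $-2\log A = (\mu_1-\mu_2)^2/4\sigma^2$, take $Q_D = 2X_{\max}^2/\sigma^2$, and evaluate the resulting oracle inequality at the candidate obtained by rounding $A^*,B^*,C^*$ to the grid (which preserves $\|C^*\|_0$), with $\beta$ chosen so the quantized tensor is both feasible and accurate. The only cosmetic differences are that you bound the quantization error via a scalar telescoping identity where the paper expands the Khatri--Rao matricization $(B_Q^*\odot A_Q^*)(C_Q^*)^T$, and you show the bias term is $O(X_{\max}^2/n_{\max}^2)$ and dominated by the complexity term, where the paper argues it is at most $X_{\max}^2/m$ --- both yield the same conclusion.
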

\begin{proof}
	The proof appears in appendix section \ref{a:gaussproof}.
\end{proof}

\begin{rmk} 
The quantity $(n_1 + n_2)F + \norm{C^*}_0$ can be viewed as the
number of degrees of freedom of the model. In this context, we note that
our estimation error is proportional to the number of degrees of freedom
of the model divided by $m$ multiplied
by the logarithmic factor $\log(n_{\max})$.
\end{rmk}

\begin{rmk}
If we were to ignore the multilinear structure and matricize the tensor as 
\begin{align*}
X^*_{(3)} = (B^* \odot A^*) (C^*)^T,
\end{align*}
where $\odot$ is the Khatri-Rao product (for details of matricization refer \cite{kolda2009tensor}) and apply the results from
\cite{NMC} we would obtain the bound
	\begin{eqnarray*}
			\begin{split}
&\frac{\E_{S,Y_{S}}\left[\| \ul{X}^*-\widehat{\ul{X}}\|_F^2\right]}{n_1 n_2 n_3}  = \\
& \quad \quad {\cal O}\left( \log(n_{\max}) (\sigma^2 + X_{\max}^2) \left(\frac{(n_1 \cdot n_2 ) F+
\|C^*\|_0}{m}\right)\right),
\end{split}
	\end{eqnarray*}
That is, the factor of $(n_1 + n_2)F$ in Theorem \ref{thm:main} has become a factor
of ($n_1 \cdot n_2)F$ when matricizing, a potentially massive
improvement.

\end{rmk}

\section{The Algorithmic framework}
\label{sec:algo}

In this section we propose an ADMM-type algorithm to solve the complexity regularized maximum
likelihood estimate problem in \eqref{eq:cmle}. We note that the feasible set
$\X$  problem  in \eqref{eq:cmle} is discrete which makes the algorithm design difficult.
Similar to \cite{NMC} we drop the discrete assumption in order to use
continuous optimization techniques.
This may be justified by choosing a very large value of $L_{\rm lev}$
and by noting that 
continuous optimization algorithms, when executed on
a computer, use finite precision arithmetic, and thus a discrete set of
points. Hence, we consider the design of an
optimization algorithm for the following problem:
\begin{equation}\label{eq:cont_opt_problem}
\begin{aligned}
&  \min_{\ul{X},A,B,C} -\log p_{\ul{X}_S}(\ul{Y}_S)+  \lambda \norm{C}_0  \\
& \text{subject to} \quad 
A \in \mathcal{A}, B \in \mathcal{B}, C \in \mathcal{C}, \\
& \qquad \qquad \norm{\ul X}_{\infty} \leq \Xmax, \ul{X} = \sum_{f = 1}^F a_f \circ b_f \circ c_f, \\ 
& \qquad \qquad \mathcal{A} = \left\{ A  \in \R^{n_1 \times F}: \norm{A}_{\infty} \leq \Amax \right\}, \\
& \qquad \qquad \mathcal{B} = \left\{ B  \in \R^{n_2 \times F}:
\norm{B}_{\infty} \leq \Bmax \right\},  \\ 
& \qquad \qquad \mathcal{C} = \left\{ C  \in \R^{n_3 \times F}: \norm{C}_{\infty} \leq \Cmax \right\}.
\end{aligned}
\end{equation}
We 
 form
the augmented Lagrangian for the above problem 
\begin{eqnarray*} \label{eq:lag_min}
	\lefteqn{ \mathcal{L}(\ul{X},A,B,C, \lambda) =   -\log p_{\ul{X}_S}(\ul{Y}_S)+ \lambda \norm{C}_0 +} \\
	\nonumber  &  & \frac{\rho}{2} \norm{\ul{X} - \sum_{f = 1}^F a_f \circ b_f \circ c_f }_F^2 
	+\lambda^T \cdot \vec \left( \ul{X} - [A,B,C] \right) \\ 
	& & + I_{\X}(\ul{X} )  + I_{\mathcal{A}}(A) +
	I_{\mathcal{B}}(B) + I_{\mathcal{C}} (C),
\end{eqnarray*}
where $\lambda$ is Lagrangian vector of size $n_1n_2n_3$ for the tensor
equality constraint and $I_{\X}(\ul{X}) ,  I_{\mathcal{A}}(A),
I_{\mathcal{B}}(B) , I_{\mathcal{C}} (C)$ are indicator functions of the
sets $\| \ul X \|_\infty \le X_{\max}$, $\mathcal{A}$, $\mathcal{B}$,
$\mathcal{C}$ respectively\footnote{The convex indicator of set $U$ is
defined as $I_U(x)  =  \text{if } x \in U $ and   $I_U(x)  = 	\infty
\text{if } x \notin U$. Note that function $I_U(x)$ is convex function
if $U$ is convex set.}. Starting from the augmented Lagrangian we extend
the ADMM-type algorithm proposed in \cite{NMC} to the tensor case as shown in
Algorithm \ref{algo:ADMM}. 
\section{The algorithm}
\begin{algorithm}\caption{ADMM-type algorithm for noisy tensor completion}\footnotesize	
    \textbf{Inputs:} $\Delta_1^{\rm stop}, \Delta_2^{\rm stop}, \eta, \rho^{(0)}$\\
	\textbf{Initialize:} $\ul{X}^{(0)}$, $A^{(0)}$,$B^{(0)}$,$C^{(0)}$,
	$\lambda^{(0)}$  
	\begin{algorithmic}
		\While {$\Delta_1 > \Delta_1^{\rm stop}, \Delta_2 > \Delta_2^{\rm stop}, t \le t_{max} $}
		\State\textbf{S1:} $\ul{X}^{(t+1)}  =  \argmin_{\ul{X}}   \mathcal{L}(\ul{X},A^{(t)},B^{(t)},C^{(t)}, \lambda^{(t)})$
		\State \textbf{S2:} $ A^{(t+1)} =  \argmin_{A}   \mathcal{L}(\ul{X}^{(t+1)},A,B^{(t)},C^{(t)}, \lambda^{(t)})$
		\State \textbf{S3:} $ B^{(t+1)} =  \argmin_{B}   \mathcal{L}(\ul{X}^{(t+1)},A^{(t+1)},B,C^{(t)}, \lambda^{(t)})$
		\State \textbf{S4:} $ C^{(t+1)} =  \argmin_{C}   \mathcal{L}(\ul{X}^{(t+1)},A^{(t+1)},B^{(t+1)},C, \lambda^{(t)})$
		\State \textbf{S5:} $\lambda^{(t+1)} = \lambda^{(t)} + \rho^{(0)} \vec \left( \ul{X}^{(t+1)} -
		[A^{(t+1)},B^{(t+1)},C^{(t+1)}]  \right)$
		\State Set $\Delta_1 =  \left\| \ul{X}^{(t+1)} -
				[A^{(t+1)},B^{(t+1)},C^{(t+1)}] \right\|_F $ 
		\State Set $\Delta_2 = \rho^{(k)} \left\|[A^{(t)},B^{(t)},C^{(t)}] -
						[A^{(t+1)},B^{(t+1)},C^{(t+1)}] \right\|_F $ 	
		\State $\rho^{(k+1)} = \begin{cases}
		\eta \rho^{(k)}, \ \rm{if} \Delta_1 \ge 10\Delta_2\\
		\rho^{(k)} / \eta, \ \rm{if} \Delta_2 \ge 10\Delta_1\\
			\rho^{(k)}, \ \rm{otherwise}
		\end{cases}$
		\EndWhile
	
	\textbf{Output:} $A = A^{(t)}, B = B^{(t)}, C = C^{(t)}  $
	\end{algorithmic}\label{algo:ADMM}
	
\end{algorithm}

The $\ul{X}$ update in Algorithm \ref{algo:ADMM} is separable across
components and so it reduces to $n_1n_2 n_3$ scalar problems. Furthermore, the
scalar problem is closed-form for $(i,j,k) \notin S$ and is a proximal-type step
for $(i,j,k) \in
S$. This is a particularly attractive feature because many common noise
densities (e.g., Gaussian, Laplace) have closed-form proximal updates \cite{NMC}. The $A$
and $B$ updates can be
converted to a constrained least squares problem and can be solved via projected
gradient descent. We solve the $C$ update via iterative hard
thresholding.
Although the convergence of this algorithm to a stationary point remains
an open question and a subject of future work, we have not encountered
problems with this in our simulations.

\section{Numerical Experiments}\label{sec:num}
In this section we include simulations which corroborate our theorem. For each
experiment we construct the true data tensor $\ul X^∗ =[ A^*, B^*,C^* ]$ by
individually constructing the CP factors $A^*,B^*,C^*$ (as described below),
where the magnitudes of entries of the true factors $A^*$, $B^*$, and $C^*$ are
bounded in magnitude by  $A^*_{\max}, B^*_{\max},$ and $ C^*_{\max}$
respectively. For the purposes of these experiments we fix  $n_1=30, n_2=30,
n_3=50$ and $\Amax^*=1, \Bmax^*=1, \Cmax^*=10$.

For a given $F$ the true CP factors were generated as random matrices
of dimensions $n_1 \times F$, $n_2 \times F$, $n_3 \times F$ with standard Gaussian 
$\mathcal{N}(0,1)$ entries. 
We then projected the entries of the $A$ and $B$ matrices so that
$\norm{A^*}_\infty \leq A_{\max}^*$ and $\norm{B^*}_\infty \leq
B_{\max}^*$. For
the $C^*$ matrix we first project $C^*$ entry-wise to the interval
$[-C_{\max}, C_{\max}]$ and then pick $k$ entries uniformly at random
and zero out all other entries so that we get the desired sparsity
$\norm{C^*}_0 = k$. 
From these tensors the tensor $\ul{X}^*$ was calculated as $\ul{X}^* =
[A^*,B^*,C^*]$ as in \eqref{eq:cpd}. 

We then take measurements at a subset of entries following a
Bernoulli sampling model with sampling rate $\gamma \in (0,1]$ and corrupt our
measurements with additive white Gaussian noise of variance
$\sigma = 0.25$ to obtain the final noisy measurements. The noisy
measurements were then used to calculate the estimate by
solving (an approximation to) the complexity regularized problem in \eqref{eq:cont_opt_problem} using
algorithm \ref{algo:ADMM}. Note that for Gaussian noise
the negative log-likelihood in problem \eqref{eq:cont_opt_problem} reduces to
a squared error loss over the sampled entries. Since in practice the parameters $A_{\max}$, $B_{\max}$,
$C_{\max}, X_{\max}$ are not known \emph{a priori} we will assume we
have an upper bound for them and in our experiments set them as $A_{\max} = 2 A_{\max}^*,
B_{\max}= 2 B_{\max}^*, C_{\max}= 2 C_{\max}^*, X_{\max}= 2\| \ul X^*
\|_\infty$. Further, we also assume that $F$ is known \emph{a priori}. 


In  figure \ref{fig:rank_sampling_rate}
we show how the log per entry squared error $\log \left(\frac{\|
\hat{\ul{X}} -  \ul{X}^*\|_F^2}{n_1n_2n_3}\right) $ decays as a function
of log sampling rate $\log \left(\gamma\right)$ for
$F=5,15$ in
the paper and a fixed sparsity level $\norm{C}_0  = 0.2n_3 F$.  The plot is obtained
after averaging over $10$ trials to average out random Bernoulli
sampling at given sampling rate $\gamma$ and noise.  Each plot
corresponds to
a single chosen value of $\lambda$, selected as the value
that gives a representative error curve (e.g., one giving lowest overall
curve, over the range of parameters
we considered). Our theoretical results
predict that the error decay should be inversely proportional to the 
sampling rate $\gamma = \frac{m}{n_1n_2n_3}$ when viewed on a log-log scale, this  corresponds
to the slope of $ - 1$. The curve of $F=5$ and $F=15$  are shown in blue
solid line and red dotted line. For both the cases the slope of curves
is similar and it is approximately $-1$. Therefore these experimental
results validate both the theoretical error
bound in corollary \ref{cor:Gauss} and the performance of our proposed algorithm.

\begin{figure}[h]
	\begin{center}
		\includegraphics[scale=0.35]{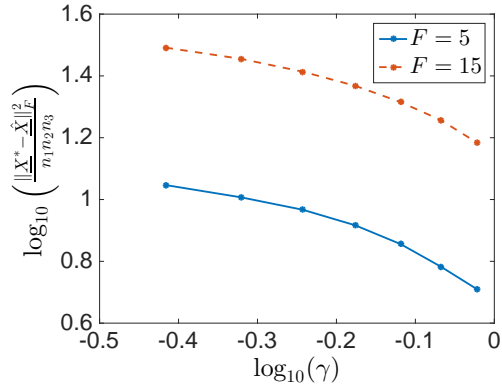}
	\end{center}
	\caption{Plot for log per-entry approximation error $\log
\left(\frac{\|  \hat{\ul{X}} -  \ul{X}^*\|_F^2}{n_1n_2n_3}\right) $  vs
the log sampling rate: $\log \left(\gamma\right)$ for the two ranks $F = 5,
15$. The slope at the higher sampling rates is approximately $-1$
(the rate predicted by our theory) in both cases.}\label{fig:rank_sampling_rate}
\end{figure}

\section{Conclusion and Future Directions}
\label{sec:conclusions}
In this work we extend the statistical theory of complexity-penalized maximum
likelihood estimation developed in \cite{NMC,knowak,libarron} to 
noisy tensor completion for tensors admitting CP
decomposition with a sparse factor.
In particular, we provide 
theoretical guarantees on the performance of sparsity-regularized maximum
likelihood estimation under a Bernoulli sampling assumption and general
i.i.d.~noise. We then instantiate the general result for the specific case of additive white
Gaussian noise. We also provided an ADMM-based algorithmic framework to solve the complexity-penalized maximum
likelihood estimation problem and provide numerical experiments to validate the theoretical bounds on synthetic data.

Obtaining error bounds for other noise distributions and
non-linear observation setting such 1-bit quantized observations is an
interesting possible research direction. Extending the main result to
approximately sparse CP factor or to tensors with multiple sparse CP
factor are also important directions for future research.




\section{Acknowledgements}
We thank Professor Nicholas Sidiropoulos for his insightful guidance and discussions on tensors which helped in
completion of this work. Swayambhoo Jain and Jarvis Haupt were supported by the DARPA Young Faculty Award, Grant
N66001-14-1-4047.  Alexander Gutierrez was supported
by the NSF Graduate Research Fellowship Program under Grant No.~00039202.

\section{Appendix} \label{sec:Appendix}

\subsection{Proof of Main Theorem}\label{sec:main_proof}
The proof of our main result is an application of the following general lemma. 

\begin{lemma}
\label{lemma:key_lem}
Let $\ul{X^*} \in \R^{n_1 \times n_2 \times n_3}$ and let $\X$ be a finite collection
of candidate reconstructions with assigned weights $\pen(\ul{X}) \geq 1$
satisfying the Kraft-McMillan inequality over $\X$.
\begin{equation}
 \sum_{\ul X \in \X} 2^{-\pen(\ul X) } \leq 1. \label{eq:kraft}
\end{equation}
Fix an integer $k \leq m \leq n_1 n_2 n_3$ and let $\gamma = \frac{ m } { n_1
n_2 n_3}$ and generate $n_1 n_2 n_3$ i.i.d. Bernoulli$(\gamma)$ random variables
$S_{i,j,k}$ so that entry $(i,j,k) \in S$ if $S_{i,j,k} = 1$ and $(i,j,k) \notin
S$ otherwise. Conditioned on $S$ we obtain independent measurements $Y_S \sim p_{X^*_S} =
\prod_{(i,j,k) \in S} p_{X^*_{i,j,k}}$. Then if $Q_D$ is an upper bound for the
maximum KL-divergence 
\[ Q_D \geq \max_{\ul X \in \X} \max_{(i,j,k)} D( p_{X^*_{i,j,k}} \| p_{X_{i,j,k}}
) ,\]
it follows that for any 
\begin{align}\label{eqn:xicond}
 \xi \geq (1 + \frac{2Q_D}{3} ) \cdot 2 \log 2  
\end{align}
the compexity-penalized maximum likelihood estimator
\[ \hat{\ul{X}}^\xi (S,\ul{Y}_S)  = \argmin_{\ul{X} \in \X} \left\{ -\log
p_{\ul{X}_S}(\ul{Y}_S) + \xi \pen(\ul{X}) \right\}  \]
satisfies the error bound 
\begin{eqnarray*}
\lefteqn{\frac{ \E_{S,\ul{Y}_S} \left[ -2\log (A(p_{\hat{X}^*},p_{X^*}))
 	\right]}{n_1 n_2 n_3}    \le \frac{ 8 Q_D \log m } { m} + }  \\ 
 	& &   3 \min_{\ul X \in \X} \left\{ \frac{ D(p_{\ul{X}^*} \| p_{\ul{X}})}{n_1 n_2 n_3} +  \left(\xi + \frac{4 Q_D \log 2}{3}\right) \frac{ \pen(\ul{X}) } { m} \right\}.
\end{eqnarray*}
\end{lemma}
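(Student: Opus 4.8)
The plan is to adapt the penalized-likelihood argument of \cite{libarron, knowak, NMC} to the random Bernoulli sampling model. The first step is to exploit that the observation likelihood \eqref{eq:Y} factorizes across entries and that the Hellinger affinity of a product distribution is the product of the per-entry affinities, so that both $D(p_{\ul X^*}\|p_{\ul X})$ and the target $-2\log A(p_{\ul X}, p_{\ul X^*})$ split into sums over $(i,j,k)$. For each candidate $\ul X \in \X$ I would form the nonnegative random variable $R(\ul X) = \sqrt{p_{\ul X_S}(\ul Y_S)/p_{\ul X^*_S}(\ul Y_S)}$, where unsampled entries contribute a factor $1$. Taking the joint expectation over the Bernoulli draw of $S$ and over $\ul Y_S$ and using independence across entries gives $\E_{S,\ul Y_S}[R(\ul X)] = \prod_{i,j,k}(1 - \gamma(1 - A_{i,j,k}))$, where $A_{i,j,k} = A(p_{\ul X_{i,j,k}}, p_{\ul X^*_{i,j,k}})$. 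Since $(1-x)e^{x} \le 1$ for $x \ge 0$, multiplying by the deterministic factor $\exp(\gamma\sum_{i,j,k}(1-A_{i,j,k}))$ produces a variable of mean at most one. This is the mechanism by which the deterministic $\gamma = m/(n_1 n_2 n_3)$ and the full-tensor affinity emerge directly from the sampling expectation.

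Next I would sum these mean-controlled variables against the Kraft weights: by the Kraft--McMillan inequality \eqref{eq:kraft}, $W = \sum_{\ul X \in \X} 2^{-\pen(\ul X)} R(\ul X)\exp(\gamma\sum_{i,j,k}(1-A_{i,j,k}(\ul X)))$ satisfies $\E[W] \le 1$. Bounding $W$ below by its single term at the data-dependent minimizer $\hat{\ul X}$ and invoking the penalized-likelihood optimality, namely $-\log p_{\hat{\ul X}_S}(\ul Y_S) + \xi\pen(\hat{\ul X}) \le -\log p_{\ul X_S}(\ul Y_S) + \xi\pen(\ul X)$ for every competitor $\ul X$, allows the estimator's likelihood and penalty to be traded for those of any fixed approximating $\ul X$; the condition $\xi \ge 2\log 2$ (implied by \eqref{eqn:xicond}) is exactly what is needed to discard the leftover $\pen(\hat{\ul X})$ term. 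Taking $\E[\log W] \le \log\E[W] \le 0$ by Jensen then yields an oracle inequality in which $\hat{\ul X}$ contributes an affinity term and the approximating $\ul X$ contributes a sampled log-likelihood ratio $\log(p_{\ul X^*_S}/p_{\ul X_S})$, whose expectation over $(S, \ul Y_S)$ is exactly $\gamma D(p_{\ul X^*}\|p_{\ul X})$.

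The remaining work is to pass from the Hellinger-affinity quantity $\sum_{i,j,k}(1 - A_{i,j,k}(\hat{\ul X}))$ produced by the crude Jensen step to the order-$1/2$ R\'enyi quantity $-2\log A(p_{\hat{\ul X}}, p_{\ul X^*}) = \sum_{i,j,k}(-2\log A_{i,j,k})$ appearing on the left, and to sharpen the constants. This is where the uniform KL bound $Q_D$ enters: by monotonicity of R\'enyi divergence in its order, $-2\log A_{i,j,k} \le D(p_{\ul X^*_{i,j,k}}\|p_{\ul X_{i,j,k}}) \le Q_D$, so each per-entry affinity is bounded away from zero, and a Bernstein-type control of the moment generating function of the log-likelihood ratio (whose variance proxy is governed by $Q_D$) refines the mean-one bound. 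The hypothesis $\xi \ge (1 + \tfrac{2Q_D}{3})\cdot 2\log 2$ in \eqref{eqn:xicond} is precisely the threshold that lets the resulting second-order term be absorbed into the penalty while keeping the leading constant equal to $3$ and producing the additive coefficient $\tfrac{4Q_D\log 2}{3}$; integrating the Bernstein tail with truncation at scale $\log m$ makes the residual $O(1/m)$ and yields the correction $8Q_D\log m/m$.

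I expect the main obstacle to be the careful coupling of the three sources of randomness --- the sampling set $S$, the noise $\ul Y_S$, and the data-dependent estimator $\hat{\ul X}$ --- arranged so that the affinity appears in full-tensor form on the left while the penalty and divergence appear in sampled form, scaled by $\gamma$, on the right. The order of operations is essential: the mean-one computation must take the joint $(S,\ul Y_S)$ expectation so the factor $\gamma$ is exact, while the R\'enyi-to-Hellinger conversion and the Bernstein refinement must be carried out so that the constants line up exactly with \eqref{eqn:xicond} and the final bound. Tracking these constants through the exponential change of measure, the Jensen and optimality steps, and the tail truncation --- rather than any single inequality --- is the delicate part of the argument.
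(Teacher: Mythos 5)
Your first two steps are sound: the joint $(S,\ul Y_S)$ expectation of $R(\ul X)$ does equal $\prod_{i,j,k}\bigl(1-\gamma(1-A_{i,j,k})\bigr)$, the Kraft-weighted mixture has mean at most one, and Jensen plus the optimality of $\hat{\ul X}$ (needing only $\xi \ge 2\log 2$) yield, for every fixed $\ul X \in \X$,
\begin{equation*}
\gamma\,\E_{S,\ul Y_S}\Bigl[\textstyle\sum_{i,j,k}\bigl(1-A_{i,j,k}(\hat{\ul X})\bigr)\Bigr] \;\le\; \tfrac{\gamma}{2}\,D(p_{\ul X^*}\|p_{\ul X}) + \tfrac{\xi}{2}\pen(\ul X).
\end{equation*}
The genuine gap is your third step. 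What this argument controls is the Hellinger-type quantity $\sum_{i,j,k}(1-A_{i,j,k})$, whereas the lemma bounds $-2\log A(p_{\hat{\ul X}},p_{\ul X^*})=\sum_{i,j,k}(-2\log A_{i,j,k})$. Your conversion uses $-2\log A_{i,j,k}\le Q_D$ (correct, by monotonicity of R\'enyi divergence in its order), but the best uniform inequality this gives is $1-A_{i,j,k}\ge \frac{1-e^{-Q_D/2}}{Q_D}\,(-2\log A_{i,j,k})$, with equality approached as $A_{i,j,k}\downarrow e^{-Q_D/2}$. Feeding this back into the display above inflates the leading constant of the oracle inequality from $3$ to one that grows linearly in $Q_D$, and---worse---this inflation multiplies the approximation term $D(p_{\ul X^*}\|p_{\ul X})/(n_1n_2n_3)$ itself. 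That is not the statement being proved: in the lemma the constant $3$ is absolute and $Q_D$ enters only additively, through $\xi+\frac{4Q_D\log 2}{3}$ and through $\frac{8Q_D\log m}{m}$.

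The ``Bernstein-type refinement'' you invoke to repair the constants cannot be carried out inside your architecture. Once the sampling randomness has been integrated out in the mean-one computation, there is no randomness left for a Bernstein inequality to concentrate, and your Kraft weights have already been spent in the Jensen step, so there is no budget left to pay for a union bound over $\X$. The paper's proof is organized in the opposite order precisely for this reason: it \emph{conditions} on $S$, and uses the Craig--Bernstein inequality together with a Kraft-weighted union bound (its Lemma \ref{lemma:good_set}, with per-candidate confidence $\delta_{\ul X}=\delta 2^{-\pen(\ul X)}$) to show that, with probability at least $1-2\delta$, simultaneously for all $\ul X\in\X$, the sampled divergence satisfies $D(p_{\ul X^*_S}\|p_{\ul X_S})\le \frac{3\gamma}{2}D(p_{\ul X^*}\|p_{\ul X})+\frac{4Q_D}{3}[\log(1/\delta)+\pen(\ul X)\log 2]$ and the sampled log-affinity satisfies $-2\log A(p_{\ul X^*_S},p_{\ul X_S})\ge \frac{\gamma}{2}(-2\log A(p_{\ul X^*},p_{\ul X}))-\frac{4Q_A}{3}[\log(1/\delta)+\pen(\ul X)\log 2]$; only then is the fixed-design penalized-ML bound applied conditionally on such a good $S$. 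The ratio of the factors $\frac{3\gamma}{2}$ and $\frac{\gamma}{2}$ is where the absolute constant $3$ comes from; the slack $\frac{4Q_A}{3}\pen(\hat{\ul X})\log 2$, evaluated at the data-dependent estimator, is exactly what forces the strengthened condition $\xi\ge(1+\frac{2Q_D}{3})\,2\log 2$ so that the penalty can absorb it; and choosing $\delta$ on the order of $1/m$ for the complementary (bad) event produces the residual $\frac{8Q_D\log m}{m}$. None of these three features is recoverable from the quantities your argument produces, so the proposal as written does not prove the lemma.
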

\begin{proof}
The proof appears in Appendix section \ref{subsec:lemproof}.
\end{proof}

For using the result in Lemma \ref{lemma:key_lem} we need to
define penalties $\pen(\ul{X}) \geq 1$ on candidate reconstructions $\ul{X}$ of
$\ul{X}^*$, so that for every subset $\X$ of the set $\X'$ specified in the
conditions of Theorem~\ref{thm:main} the summability condition
$\sum_{\ul{X}\in\X} 2^{-\pen(\ul{X})} \leq 1$ holds. To this end, we will use
the fact that for any $\X \subseteq \X'$ we always have $\sum_{\ul{X} \in \X}
2^{-\pen(\ul{X})} \leq \sum_{\ul{X} \in \X'} 2^{-\pen(\ul{X})}$; thus, it
suffices for us to show that for the specific set $\X'$ described in
\eqref{eq:set_x}, the penalty satisfies the Kraft-McMillan inequality:
\begin{equation}\label{eqn:Kraft}
	\sum_{\ul{X}\in\X'} 2^{-\pen(\ul{X})} \leq 1.
\end{equation}
The Kraft-Mcmillan Inequality is automatically satisfied if we set the
$\pen(\ul{X})$ to be the code length of some uniquely decodable binary code for
the elements $\ul{X} \in \X'$ \cite{Cover}. 

We utilize a common encoding strategy for encoding the elements of $\mathcal{A}$
and $\mathcal{B}$. We encode each entry of the matrices using $\log_2(L_{ \rm
lev})$ bits in this manner the total number of bits needed to code any elements
in $\mathcal{A}$  and $\mathcal{B}$ is $n_1 F \log_2(L_{ \rm lev})$ and $n_2 F
\log_2(L_{ \rm lev})$ respectively. Since the elements of set $\mathcal{C}$ are
sparse we follow a two step procedure: first we encode the location of the non-zero
elements using $ \log_2 L_{\rm loc}$ bits where $L_{\rm loc} = 2^{\lceil \log_2(
n_3 F ) \rceil}$ and then we encode the entry using $\log_2(L_{ \rm lev})$ bits.
Now, we let $\X''$ be the set of all such $\ul{X}$ with CPD factors $A \in
\mathcal{A}$,$B \in \mathcal{B}$, $C \in \mathcal{C}$,  and let the code for
each $\ul{X}$ be the concatenation of the (fixed-length) code for $A$ followed
by (fixed-length) code for $B$ followed by the (variable-length) code for $C$.
It follows that we may assign penalties $\pen(\ul{X})$ to all $ \ul{X} \in \X''$
whose lengths satisfy
\begin{align*}
	\pen(\ul{X}) = (n_1  + n_2 ) F\log_2 L_{\rm lev} + \|C\|_{0} \log_2 ( L_{\rm loc} L_{\rm lev}).
\end{align*}
By construction such a code is uniquely decodable, since
by the Kraft McMillan inequality we have $\sum_{X\in\X''} 2^{-\pen(\ul{X})} \leq
1$. Further, since $\X' \subset \X''$ this also satisfies the inequality
$\sum_{\ul{X}\in \X} 2^{-\pen(\ul{X})} \leq 1$ in \eqref{eq:kraft} in Lemma
\ref{lemma:key_lem} is satisfied for $\X'$ sa defined in statement of the
Theorem \ref{thm:main}. Now for any set $\ul{X} \subseteq \X'$ and using coding
strategy described above, the condition \eqref{eq:kraft} in Lemma
\eqref{lemma:key_lem} is satisfied. So for randomly subsampled and  noisy
observations $Y_{\S}$ our estimates take the form
\begin{align*}
\widehat{\ul{X}}^{\xi} &= \underset{\ul{X} = [A,B,C] \in \X}{\textrm{arg min}}
\left\{-\log p_{X_{\S}}(Y_{\S}) + \xi \pen(\ul{X})  \right\} \\ 
&= \underset{\ul{X} = [A,B,C] \in \X}{\textrm{arg min}}\left\{-\log p_{X_{\S}}(Y_{\S}) +   \xi   \log_2 ( L_{\rm loc}  L_{\rm lev} )  \|C\|_{0}  \right\} 
\end{align*}
Further, when $\xi$ satisfies \eqref{eqn:xicond},  we have
\begin{eqnarray*}
\lefteqn{\frac{ \E_{S,\ul{Y}_S} \left[ -2\log (A(p_{\hat{X}^*},p_{X^*}))
 	\right]}{n_1 n_2 n_3}    \le \frac{ 8 Q_D \log m } { m} + }  \\ 
 	& &    3 \min_{\ul X \in \X} \left\{ \frac{ D(p_{\ul{X^*}} \| p_{\ul{X}})
 	} {n_1 n_2 n_3}   + \left(\xi + \frac{4 Q_D \log 2}{3}\right) \right. \\ 
 	 	 	& & \left. \cdot \frac{(n_1 + n_2) F \log_2 L_{\rm lev} + \|C\|_{0} \log_2 ( L_{\rm loc}  L_{\rm lev}) } { m} \right\}\\
 &  & \le \frac{ 8 Q_D \log m } { m} +	 	 \\ 
 & &    3 \min_{\ul X \in \X} \left\{ \frac{ D(p_{\ul{X^*}} \| p_{\ul{X}})
 	 	 	 	} {n_1 n_2 n_3}   +   \left(\xi + \frac{4 Q_D \log 2}{3}\right) \right. \\ 
& &  \qquad \qquad \left.  \cdot \log_2 (L_{\rm loc} L_{\rm lev}) \frac{(n_1  + n_2  )F + \|C\|_{0}  } { m}  \right\}.
\end{eqnarray*}
Finally, we let $\lambda = \xi \cdot \log_2 ( L_{\rm loc} L_{\rm lev})$
and using the relation that 
\begin{equation}\label{eq:cl_up}
	\log_2 L_{\rm loc}  L_{\rm lev}  \leq  2 \cdot (\beta + 2)  \cdot \log(n_{\max})  
\end{equation}
which follows by our selection of $L_{\rm lev}$ and $L_{\rm loc}$ and the fact that $F, n_3 \le n_{\max}$ and $n_{\max} \ge 4$. 
Using the condition \eqref{eq:cl_up} and \eqref{eqn:xicond} in Lemma \ref{lemma:key_lem}  it follows that for 
\begin{equation*}
	\lambda \geq 4 (\beta + 2) \left(1+\frac{2Q_D}{3}\right) \log(n_{\max})
\end{equation*}
the estimate 
\begin{align}
\hat{{\ul X}}^\lambda = \argmin_{ \ul X  = [A,B,C] \in \X} \left( -\log
p_{\ul{X}_S}(Y_S) + \lambda \norm{C}_0 \right),  
\end{align}
satisfies the bound \eqref{eq:bound} in Thereom \ref{thm:main}.

\subsection{Proof of Lemma}
\label{subsec:lemproof}

The main requirement for the proof of this lemma is to show that our random
Bernoulli measurement model is ``good'' in the sense that it will allow us to apply some
known concentration results. Let $Q_D$ be an upper bound on the KL-divergence of
$p_{X_{i,j}}$ from $p_{\ul{X}^*_{i,j}}$ over all elements $X \in \X$:
\[ Q_D \geq \max_{\ul {X} \in \X} \max_{i,j} D (p_{\ul{X}^*_{i,j,k}} \|
p_{\ul{X}_{i,j,k}} ) . \] 
Similarly, let $Q_A$ be an upper bound on negative two times the log of the
Hellinger affinities between the same:
\[ Q_A \geq \max_{\ul {X} \in \X} \max_{i,j} -2 \log\left( A (p_{\ul{X}^*_{i,j,k}} \|
p_{\ul{X}_{i,j,k}} )  \right) . \] 

Let $m \leq n_1 n_2 n_3$ be the expected total number of measurements and $\gamma = m/
(n_1 n_2 n_3)$ to be the ratio of measured entries to total entries. Given any
$\delta \in (0,1)$ define the ``good'' set $\G_{\gamma, \delta}$ as the subset of
all possible sampling sets that satisfy a desired property:
\begin{align*}
\G_{\gamma, \delta} : = \Bigg\{ S \subseteq [n_1] \times [n_2] \times [n_3] :
\left( \bigcap_{\ul X \in \X} D(p_{\ul{X}^*_S} \| p_{\ul{X}_S} ) \leq \frac{3
\gamma } {2} D(p_{\ul{X}^*} \| p_{\ul{X}}) + (4/3) Q_D [ \log(1 / \delta) +
\pen(\ul{X}) \log 2 ] \right) \\
\cap \left( \bigcap_{\ul X \in \X} (-2 \log A(p_{\ul{X}^*_S}, p_{\ul{X}_S})) \geq
\frac{\gamma}{2} ( -2 \log A(p_{\ul{X}^*}, p_{\ul{X}})) - (4/3) Q_A
[\log(1/\delta) + \pen(\ul{X}) \log 2 ]  \right) 
\Bigg\}
\end{align*}

We show that an Erd\'os-Renyi model with parameter $\gamma$ will be
``good'' with high probability in the following lemma.

\begin{lemma}
\label{lemma:good_set}
Let $\X$ be a finite collection of countable estimates $\ul{X}$ for $\ul{X^*}$
with penalties $\pen(\ul{X})$ satifying the Kraft inequality \eqref{eq:kraft}.
Then for any fixed $\gamma, \delta \in (0,1)$ let $S$ be a random subset of
$[n_1] \times [n_2] \times [n_3]$ be a random subset generated according the
Erd\'os-Renyi model.Then $\P[ S \notin \G_{\gamma, \delta}) \leq 2 \delta. $
\end{lemma}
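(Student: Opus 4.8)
The plan is to exploit the additivity of both the KL divergence and the (negative log) Hellinger affinity over independent coordinates, and then to reduce membership in $\G_{\gamma,\delta}$ to two one-sided concentration statements per candidate, each controlled by a Chernoff bound and stitched together by a Kraft-weighted union bound.

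First I would record the decomposition. Since the measurements are independent across the sampled indices, for every fixed $\ul X \in \X$ we have $D(p_{\ul X^*_S}\|p_{\ul X_S}) = \sum_{(i,j,k)} S_{i,j,k}\, d_{i,j,k}$ and $-2\log A(p_{\ul X^*_S},p_{\ul X_S}) = \sum_{(i,j,k)} S_{i,j,k}\, a_{i,j,k}$, where $d_{i,j,k} = D(p_{\ul X^*_{i,j,k}}\|p_{\ul X_{i,j,k}}) \in [0,Q_D]$ and $a_{i,j,k} = -2\log A(p_{\ul X^*_{i,j,k}},p_{\ul X_{i,j,k}}) \in [0,Q_A]$, and the $S_{i,j,k}$ are i.i.d.\ Bernoulli$(\gamma)$. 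Each is thus a sum of independent, nonnegative, bounded summands with mean $\gamma D(p_{\ul X^*}\|p_{\ul X})$ (resp.\ $\gamma(-2\log A(p_{\ul X^*},p_{\ul X}))$). The KL constraint in $\G_{\gamma,\delta}$ is exactly an upper deviation of the first sum above $\tfrac32$ times its mean, and the affinity constraint is a lower deviation of the second sum below $\tfrac12$ times its mean.

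Second, I would prove the per-candidate tail bound by a one-sided Chernoff argument. Using $\E[e^{s S_{i,j,k} d_{i,j,k}}] = 1 + \gamma(e^{s d_{i,j,k}}-1) \le \exp(\gamma(e^{s d_{i,j,k}}-1))$ and the elementary estimate $e^x - 1 - x \le \tfrac{x^2/2}{1 - x/3}$ for $0\le x < 3$, together with the second-moment bound $\sum d_{i,j,k}^2 \le Q_D \sum d_{i,j,k} = Q_D\, D(p_{\ul X^*}\|p_{\ul X})$, gives a clean log-MGF bound for the KL sum. Choosing $s = 3/(4Q_D)$ makes the multiplicative factor equal $\tfrac32$ and the linear-in-$u$ coefficient equal $\tfrac43 Q_D$, so that with $u := \log(1/\delta) + \pen(\ul X)\log 2$ the upper tail has probability at most $e^{-u} = \delta\, 2^{-\pen(\ul X)}$. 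The affinity sum is handled symmetrically with $s = -3/(4Q_A)$, where the simpler bound $e^{-x}-1+x \le x^2/2$ and $\sum a_{i,j,k}^2 \le Q_A(-2\log A(p_{\ul X^*},p_{\ul X}))$ again yield a lower-tail probability at most $\delta\, 2^{-\pen(\ul X)}$. I would then take a union bound over $\X$ separately for the two families of events: since the per-candidate failure probability is $\delta\,2^{-\pen(\ul X)}$ and $\sum_{\ul X \in \X} 2^{-\pen(\ul X)} \le 1$ by the Kraft inequality \eqref{eq:kraft}, the KL constraint fails for some $\ul X$ with probability at most $\delta$, and likewise for the affinity constraint, so $\P[S \notin \G_{\gamma,\delta}] \le 2\delta$.

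The main obstacle is the concentration step: one must tune the Chernoff parameter $s$ so that the resulting bound matches the exact constants $\tfrac32,\tfrac12$ and $\tfrac43$ appearing in $\G_{\gamma,\delta}$, and in particular the upper (KL) tail requires keeping $s Q_D < 3$ to control the $(1-sQ_D/3)^{-1}$ factor, making the two tails slightly asymmetric in their analysis. The remaining ingredients---the additive decomposition, the variance-to-mean bounds $\sum d^2 \le Q_D D$ and $\sum a^2 \le Q_A(-2\log A)$, and the Kraft-based union bound---are routine once the tail estimate is in hand.
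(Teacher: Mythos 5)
Your proposal is correct, and its overall architecture coincides with the paper's: the same additive decomposition of $D(p_{\ul{X}^*_S}\|p_{\ul{X}_S})$ and $-2\log A(p_{\ul{X}^*_S},p_{\ul{X}_S})$ into Bernoulli-weighted sums, the same two one-sided deviation events, and the same Kraft-weighted union bound with per-candidate confidence $\delta_{\ul{X}} = \delta\, 2^{-\pen(\ul{X})}$. The genuine difference is the engine for the per-candidate tail bound. The paper treats this as a black box: it sets $U_{i,j,k} = S_{i,j,k}\, D(p_{\ul{X}^*_{i,j,k}}\|p_{\ul{X}_{i,j,k}})$, notes $\abs{U_{i,j,k}-\E[U_{i,j,k}]}\le Q_D$ and $\var(U_{i,j,k})\le \gamma\, D(p_{\ul{X}^*_{i,j,k}}\|p_{\ul{X}_{i,j,k}})^2$, and invokes the Craig--Bernstein inequality with $\epsilon = 3/(4Q_D)$, $\beta = Q_D$. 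You instead derive the tail bound from first principles using the exact Bernoulli moment generating function $\E[e^{s S_{i,j,k} d_{i,j,k}}] = 1+\gamma(e^{s d_{i,j,k}}-1)$ together with the Bernstein-type estimate $e^x-1-x\le \frac{x^2/2}{1-x/3}$; your Chernoff parameter $s = 3/(4Q_D)$ plays exactly the role of the paper's $\epsilon$, and the constants $3\gamma/2$ and $4Q_D/3$ in the definition of $\G_{\gamma,\delta}$ fall out identically. What your route buys is self-containedness (you are in effect re-proving Craig--Bernstein in the special Bernoulli case, and working with the exact MGF rather than a generic variance-plus-boundedness condition is, if anything, slightly sharper); what the paper's route buys is brevity, since the concentration inequality is cited rather than derived. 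One small bookkeeping remark on your affinity tail: with $\theta = 3/(4Q_A)$ and $e^{-x}-1+x\le x^2/2$, the mean-shrinkage factor you actually obtain is $1-\theta Q_A/2 = 5/8$ rather than the $1/2$ appearing in $\G_{\gamma,\delta}$; this is harmless because $5/8 \ge 1/2$, so the lower-tail event defining $\cE_A$ is contained in the event you control and the bound $e^{-u} = \delta\, 2^{-\pen(\ul{X})}$ still holds with slack (alternatively $\theta = 1/Q_A$ gives the factor $1/2$ exactly with a smaller additive term $Q_A u \le \tfrac{4}{3}Q_A u$). The paper sidesteps this entirely by asserting that a ``similar argument'' with Craig--Bernstein handles the affinity event.
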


\begin{proof}
Note that $\G_{\gamma, \delta}$ is defined in terms of an intersection of two
events, define them to be 
\[ \cE_D : = 
\left\{ \bigcap_{\ul X \in \X} D(p_{\ul{X^*}_S} \| p_{\ul{X}_S} ) \leq \frac{3
\gamma } {2} D(p_{\ul{X^*}} \| p_{\ul{X}}) + (4/3) Q_D [ \log(1 / \delta) +
\pen(\ul{X}) \log 2 ] \right\} 
\]
and 
\[ \cE_A := 
\left\{ \bigcap_{\ul X \in \X} (-2 \log A(p_{\ul{X^*}_S}, p_{\ul{X}_S})) \geq
\frac{\gamma}{2} ( -2 \log A(p_{\ul{X^*}}, p_{\ul{X}})) - (4/3) Q_A
[\log(1/\delta) + \pen(\ul{X}) \log 2 ]  \right\}.  \]
We apply the union bound to find that 
\[ \P\left[ S \notin \G_{\gamma,\delta} \right] \leq \P\left[ \cE_u^C \right] +
\P\left[ \cE_\ell^C \right] ,\]
and will prove the theorem by showing that each of the two probabilities on the
right-hand side are less than $\delta$, starting with $\P[\cE_u^C]$.

Since the observations are conditionally independent given $S$, we
know that for fixed $\ul{X} \in \X$, 
\[ D(p_{\ul{X}_S^*}\|p_{\ul{X}_S}) = \sum_{ (i,j,k) \in S } D(p_{X^*_{i,j,k}} \|
p_{X_{i,j,k}} ) = \sum_{i,j,k} S_{i,j,k}  D(p_{X^*_{i,j,k}} \| p_{X_{i,j,k}}) , \] 
where $S_{i,j,k} \stackrel{\text{i.i.d.}}{\sim}$Bernoulli($\gamma$). We will
show that random sums of this form are concentrated around its mean using the
Craig-Bernstein inequality
. 

The version ofthe Craig-Bernstein inequality that we will use states: let
$U_{i,j,k}$ be random variables such that we have the uniform bound $\abs{
U_{i,j,k} - \E[ U_{i,j,k}] } \leq \beta$ for all $i,j,k$. Let $\tau  > 0 $ and
$\epsilon$ be such that
$ 0 < \epsilon \beta/3 < 1$. Then 
\[ \P\left[ \sum_{i,j,k} (U_{i,j,k} - \E[U_{i,j,k}] ) \geq \frac{ \tau }
{\epsilon} +
\epsilon \frac{ \sum_{i,j,k} \var(U_{i,jk}) } { 2 (1 - \epsilon \beta /3) }
\right] \leq e^{-\tau}  \label{eq:cb}. \]

To apply the Craig-Bernstein inequality to our problem we first fix
$\ul{X} \in \X$ and define $U_{i,j,k} =
S_{i,j,k} D(p_{\ul{X}^*_{i,j,k}} \| p_{X_{i,j,k}})$. Note that 
$ U_{i,j,k} \leq Q_D \Rightarrow  \abs{U_{i,j,k} - \E[U_{i,j,k}]} \leq Q_D$. We
also bound the variance via
\begin{align*}
 \var(U_{i,j,k}) &= \gamma ( 1-
\gamma) \left( D(p_{X^*_{i,j,k}} \| p_{X_{i,j,k}}) \right)^2  \\
&\leq \gamma \left( D(p_{X^*_{i,j,k}} \| p_{X_{i,j,k}}) \right)^2 .
\end{align*}
Then let $\epsilon = \frac{3}{4 Q_D}$ and $\beta = Q_D$ in \eqref{eq:cb} to get
that 
\[ \P \left[ \sum_{i,j,k} (S_{i,j,k} - \gamma) D(p_{X^*_{i,j,k}} \|
p_{X_{i,j,k}}) \geq  \frac{4Q_D \tau  } {3} + \frac{ \sum_{i,j,k} \gamma \cdot
\left( D(p_{X^*_{i,j,k}} \| p_{X_{i,j,k}}) \right)^2 } { 2 Q_D}  \right] \leq
e^{-\tau} . \]
Now use the fact that $D(p_{X^*_{i,j,k}} \| p_{X_{i,j,k}}) \leq Q_D$ by
definition to cancel out the square term to get: 
\[ \P \left[ \sum_{i,j,k} (S_{i,j,k} - \gamma) D(p_{X^*_{i,j,k}} \|
p_{X_{i,j,k}}) \geq  \frac{4Q_D \tau  } {3} + \frac{ 3\gamma}{2} \sum_{i,j,k}  \cdot
D(p_{X^*_{i,j,k}} \| p_{X_{i,j,k}})   \right] \leq
e^{-\tau} . \]
Finally, we define $\delta = e^{-\tau}$, and simplify to arrive at 
\begin{equation} 
\P \left[  D(p_{\ul{X}^*_S} \| p_{\ul{X}_S}) \geq \frac{ 4 Q_D \log(1/\delta)
} {3} + \frac{3\gamma}{2} D(p_{\ul{X}^*} \| p_{\ul{X}} )   \right] \leq \delta,
\label{eq:pointwise_bound}
 \end{equation}
for any $\delta$.

To get a uniform bound over all $\ul{X} \in \X$ define $\delta_{\ul{X}} : = \delta
2^{-\pen(\ul X)}$ and use the bound in \eqref{eq:pointwise_bound} with
$\delta_{\ul{X}}$ and apply the union bound over the class $\X$ to find that
\begin{equation}
 \P\left[ \bigcup_{ \ul{X} \in \X} D(p_{\ul{X}_S^*} \| p_{\ul{X}_S} ) \geq
\frac{3 \gamma}{2} D(p_{\ul{X}^*}\|p_{\ul{X}}) + \frac{4 Q_D}{3} \left[
\log(1/\delta) + \pen(\ul{X}) \cdot \log 2
\right] \right] \leq \delta.
\label{eq:uniform_bound_D}
\end{equation}

An similar argument (applying Craig-Bernstein and a union bound) can be applied
to $\cE_A$ to obtain
\begin{equation}
 \P\left[ \bigcup_{\ul X \in \X} \left( -2\log A(p_{\ul{X}_S^*}, p_{\ul{X}_S})
\right) \leq \frac{ \gamma } {2} (-2 \log A(p_{\ul X^*}, p_{\ul{X}}) )- (4
Q_A/3) [\log(1/\delta) + \pen(\ul{X}) \cdot \log 2 ] \right] \leq \delta
\label{eq:uniform_bound_A}
\end{equation}

This completes the proof of lemma \ref{lemma:good_set}.
\end{proof}

Given lemma \ref{lemma:good_set}, the rest of the proof of lemma \ref{lemma:key_lem} is
a straightforward extension of the already-published proof of lemma A.1 in
\cite{NMC}.

\subsection{Proof of Corollary \ref{cor:Gauss}}\label{a:gaussproof}
\label{subsec:gausscase}
We first establish a general error bound, which we then specialize to the case stated in the corollary.  Note that for $\ul{X}^*$ as specified and any $\ul{X}\in\X$, using the model \eqref{eqn:likGauss} we have
\begin{equation*}
	D(p_{X_{i,j,k}^*}\|p_{X_{i,j,k}}) =  \frac{(X_{i,j,k}^*-X_{i,j,k})^2}{2\sigma^2}
\end{equation*}
for any fixed $(i,j,k)\in S$. It follows that $D(p_{\ul{X}^*}\|p_{\ul{X}})=\|\ul{X}^*-\ul{X}\|_F^2/2\sigma^2$. Further. as the amplitudes of entries of $\ul{X}^*$ and all $\ul{X}\in\X$ upper bounded by $\Xmax$, it is easy to see that we may choose $Q_D = 2 \Xmax^2/\sigma^2$.  Also, for any $\ul{X}\in\X$ and any fixed $(i,j,k)\in S$ it is easy to show that in this case
\begin{equation*}
	-2\log A(p_{X_{i,j,k}}, p_{X_{i,j,k}^*}) =  \frac{(X_{i,j,k}^*-X_{i,j,k})^2}{4\sigma^2},
\end{equation*}
so that $-2\log A(p_{\ul{X}}, p_{\ul{X}^*})=\|\ul{X}^*-\ul{X}\|_F^2/4\sigma^2$.  It follows that 
\begin{equation*}
	\E_{S, \ul{Y}_{S}} \left[ -2 \log A(p_{\hat{\ul{X}}},p_{\ul{X}^*}) \right] = \frac{\E_{S, \ul{Y}_{S}} \left[ \|\ul{X}^*-\widehat{\ul{X}}\|_F^2 \right]}{4\sigma^2}.
\end{equation*}
Now for using Theorem~\ref{thm:main}, we first substitute the value of $Q_{D} = 2 \Xmax^2/\sigma^2 $  to obtain the following condition on $\lambda$
\begin{equation*}
	\lambda \geq 4 \cdot \left(1 + \frac{4\Xmax^2}{3\sigma^2}\right) \cdot  (\beta + 2) \cdot \log(n_{\max}). 
\end{equation*}
Above condition implies that the specific choice of $\lambda$ given \eqref{eqn:lamchoose} is a valid choice to use if we want to invoke 
Theorem~\ref{thm:main}. So fixing $\lambda$  as given \eqref{eqn:lamchoose} and using Theorem~\ref{thm:main}, the sparsity penalized ML estimate satisfies the per-element mean-square error bound
\begin{align*}
	\nonumber  \frac{\E_{S,\ul{Y}_{S}}\left[\|\ul{X}^*-\widehat{\ul{X}}\|_F^2\right]}{n_1 n_2 n_3} \leq  & \frac{64 \Xmax^2 \log m}{m} + \\ 
&	 6 \cdot \min_{\ul{X}\in\X} \left\{ \frac{\|\ul{X}^*-\ul{X}\|_F^2}{n_1 n_2 n_3}  + 
 \left(2\sigma^2 \lambda + \frac{24\Xmax^2 (\beta+2)\log (n_{\max})}{3}\right) \left(\frac{(n_1 + n_2)F + \|C\|_0}{m}\right)\right\}.
\end{align*}
Notice that the above inequality is sort of an oracle type inequality because it implies that for any $\ul{X} \in \X $ we have 
\begin{align*}\label{eqn:oracle}
	\nonumber  \frac{\E_{S,\ul{Y}_{S}}\left[\|\ul{X}^*-\widehat{\ul{X}}\|_F^2\right]}{n_1 n_2 n_3} \leq  & \frac{64 \Xmax^2 \log m}{m} + \\ 
	&6 \cdot \left\{ \frac{\|\ul{X}^*-\ul{X}\|_F^2}{n_1 n_2 n_3}  + 
	\left(2\sigma^2 \lambda + \frac{24\Xmax^2 (\beta+2)\log (n_{\max})}{3}\right) \left(\frac{(n_1 + n_2)F + \|C\|_0}{m}\right)\right\}.
\end{align*}
We use this inequality for a specific candidate reconstruction of the form $\ul{X}^*_Q = [A^*_Q,B^*_Q, C^*_Q ] $ where the entries of s
 $A^*_Q$ are the closest discretized surrogates of the entries of $A^*$, $B^*_Q$ are the closest discretized surrogates of the entries of $B^*$, and $C^*_Q$ are the closest discretized surrogates of the non-zeros entries of $C^*$ (and zero otherwise). For proceeding further we need to bound $\| \ul{X}_Q^* -  \ul{X}^* \|_{\max}$. For this purpose we consider matricization of tensor across the third dimension as follows
 \begin{align*}
 \| \ul{X}_Q^* -  \ul{X}^* \|_{\max} &=   \left\| \left(B_Q^* \odot A_Q^* \right) (C^*_Q)^T - \left(B^* \odot A^* \right) (C^*)^T  \right\|_{\max}
 \end{align*}
Next we write $A_Q^* = A^* + \Delta_A$,  $B_Q^* = B^* + \Delta_B$ and $C_Q^* = C^* + \Delta_C$ with straight forward matrix multiplication we can obtain that 
\begin{align}
\left(B_Q^* \odot A_Q^* \right) (C^*_Q)^T = & \left(B^* \odot A^* \right) (C^*)^T  +       \left( \Delta_A \odot B^*  + A^* \odot \Delta_B + \Delta_A \odot \Delta_B\right) (C^*)^T   \nonumber \\ 
& +\left( A^* \odot B^* + \Delta_A \odot B^*  + A^* \odot \Delta_B + \Delta_A \odot \Delta_B\right) \Delta_C^T 
\end{align} 
 Using this identity it follows
 \begin{align*}
 \| \ul{X}_Q^* -  \ul{X}^* \|_{\max} = \left\|\left( \Delta_A \odot B^*  + A^* \odot \Delta_B + \Delta_A \odot \Delta_B\right) (C^*)^T +\left( A^* \odot B^* + \Delta_A \odot B^*  + A^* \odot \Delta_B + \Delta_A \odot \Delta_B\right) \Delta_C^T  \right\|_{\max} 
  \end{align*}
Now using the facts that $ \| A \odot B\|_{\max} =  \| A \|_{\max}  \|
B\|_{\max}$,  $\| A B\|_{\max} \le F \| A \| \|B\|_{\max} $ and triangle
inequality for the $\| \cdot \|_{\max}$ norm it is easy to show that 
\begin{align*}
 \| \ul{X}_Q^* -  \ul{X}^* \|_{\max} \le F \left[ 
 (\|\Delta_A \|_{\max} + \|A \|_{\max}) (\|\Delta_B \|_{\max} + \|B \|_{\max}) (\|\Delta_C \|_{\max} + \|C \|_{\max})   
 - \|A \|_{\max} \|B \|_{\max} \|C \|_{\max} \right]
\end{align*}
Further, using the fact that $\|\Delta_A \|_{\max} \le \frac{A_{\max}}{L_{\rm lev} -1} $ , $\|\Delta_B \|_{\max} \le \frac{B_{\max}}{L_{\rm lev} -1} $, and $\|\Delta_C \|_{\max} \le \frac{C_{\max}}{L_{\rm lev} -1} $, we have 
 \begin{align*}
 & \| \ul{X}_Q^* -  \ul{X}^* \|_{\max} \\
 &\le F \left[ \left(\frac{A_{\max}}{L_{\rm lev} -1} + A_{\max}\right)  \left(\frac{B_{\max}}{L_{\rm lev} -1}+ \|B \|_{\max}\right)  \left(\frac{C_{\max}}{L_{\rm lev} -1}+ C_{\max}\right)   
 -A_{\max} B_{\max} C_{\max} \right] \\
 &\le F A_{\max} B_{\max} C_{\max} \left[ \left(1 + \frac{1}{L_{\rm lev}-1}  \right)^3 - 1 \right] \\ 
 &\le \frac{F A_{\max} B_{\max} C_{\max} }{L_{\rm lev}-1} \left[ 3 + \frac{3}{L_{\rm lev}-1} + \frac{1}{(L_{\rm lev}-1)^2} \right] \\
 &\le \frac{7 F A_{\max} B_{\max} C_{\max} }{L_{\rm lev}-1},
 \end{align*}
where in the second last step we have used $L_{\rm lev} \ge 2$.  Now, it is straight-forward to show that our choice of $\beta$ in \eqref{eqn:beta} implies $L_{\rm lev} \geq 14F A_{\max}B_{\max}C_{\max}/\Xmax + 1$, so each entry of $ \| \ul{X}_Q^* -  \ul{X}^* \|_{\max} \le \Xmax/2$.  This further implies that for the candidate estimate $\ul{X}_Q^*$ we have  $\|\ul{X}_Q^*\|_{\max} \le X_{\max}$, i.e., $ \ul{X}_Q^* \in \X$. Moreover, we  
\begin{align}
\frac{ \| \ul{X}^* - \ul{X}^*_Q \|_F^2 }{n_1 n_2 n_3} \le \left(\frac{7 F A_{\max} B_{\max} C_{\max} }{L_{\rm lev}-1}\right)^2 \le \frac{X_{\max}^2}{m}, 
\end{align}	
where the last inequality 	follows from the fact that our specific choice of $\beta$ in \eqref{eqn:beta} also implies $L_{\rm lev} \geq 7 F \sqrt{m} A_{\max} B_{\max} C_{\max}/\Xmax$. 

Finally, we evaluate the oracle inequality for \eqref{eqn:oracle} for $\ul{X}_Q^*$ and  using the fact that $\|C^*_Q\|_0 = \|C^*\|_0 $ and using the value of $\lambda$ specified in the corollary we have
\begin{eqnarray*}
	\frac{\E_{S,\ul{Y}_{S}}\left[\|\ul{X}^*-\widehat{\ul{X}}\|_F^2\right]}{n_1 n_2 n_3} \leq \frac{70 \Xmax^2 \log m}{m} + 24 (\sigma^2 + 2\Xmax^2) (\beta+2)\log (n_{\max})\left(\frac{ (n_1 + n_2) F + \|C^*\|_0}{m}\right).
\end{eqnarray*}

\newpage
\bibliographystyle{IEEEbib}
\bibliography{report_bib}
\end{document}